\newtheorem{lemma}{Lemma}
\newtheorem{proof}{Proof}
\title{\textsc{MorphoBench}: A Benchmark with Difficulty Adaptive to Model Reasoning}
\author{\textbf{Xukai Wang{$^{1,4}$}\thanks{Contributed equally.}, Xuanbo Liu{$^{1,3}$}\footnotemark[1], Mingrui Chen{$^{1,4}$}\footnotemark[1], Haitian Zhong{$^{1,4}$}\footnotemark[1], Xuanlin Yang{$^{1,2}$}\footnotemark[1], Bohan Zeng{$^{2}$}\footnotemark[1],}\\ \textbf{Jinbo Hu{$^{2}$}\footnotemark[1], Hao Liang{$^{1,2}$}, Junbo Niu{$^{2}$}, Xuchen Li{$^{1,4}$}, Ruitao Wu{$^{1,3}$}, Ruichuan An{$^{2}$}, Yang Shi{$^{2}$}, } \\ \textbf{Liu Liu{$^{3}$}, Xu-Yao Zhang{$^{4}$}, Qiang Liu{$^{4}$}, Zhouchen Lin{$^{2}$}, Wentao Zhang{$^{1,2}$}\thanks{Corresponding authors: wentao.zhang@pku.edu.cn, dongbin@math.pku.edu.cn}, Bin Dong{$^{1,2  \dag}$}} \\ 
    {$^1$}Zhongguancun Academy \quad {$^2$}Peking University \quad {$^3$} Beihang University \\ {$^4$}School of Artificial Intelligence, University of Chinese Academy of Sciences \\
\url{https://github.com/OpenDCAI/MorphoBench}
}
\begin{document}
\maketitle
\begin{abstract}
With the advancement of powerful large-scale reasoning models, effectively evaluating the reasoning capabilities of these models has become increasingly important. However, existing benchmarks designed to assess the reasoning abilities of large models tend to be limited in scope and lack the flexibility to adapt their difficulty according to the evolving reasoning capacities of the models. To address this, we propose \textsc{MorphoBench}, a benchmark that incorporates multidisciplinary questions to evaluate the reasoning capabilities of large models and can adjust and update question difficulty based on the reasoning abilities of advanced models. Specifically, we curate the benchmark by selecting and collecting complex reasoning questions from existing benchmarks and sources such as Olympiad-level competitions. Additionally, \textsc{MorphoBench} adaptively modifies the analytical challenge of questions by leveraging key statements generated during the model's reasoning process. Furthermore, it includes questions generated using simulation software, enabling dynamic adjustment of benchmark difficulty with minimal resource consumption. We have gathered over 1,300 test questions and iteratively adjusted the difficulty of \textsc{MorphoBench} based on the reasoning capabilities of models such as o3 and GPT-5. \textsc{MorphoBench} enhances the comprehensiveness and validity of model reasoning evaluation, providing reliable guidance for improving both the reasoning abilities and scientific robustness of large models.
\end{abstract}

\section{Introduction}

\begin{figure}[h]
    \centering    \includegraphics[width=0.98\linewidth]{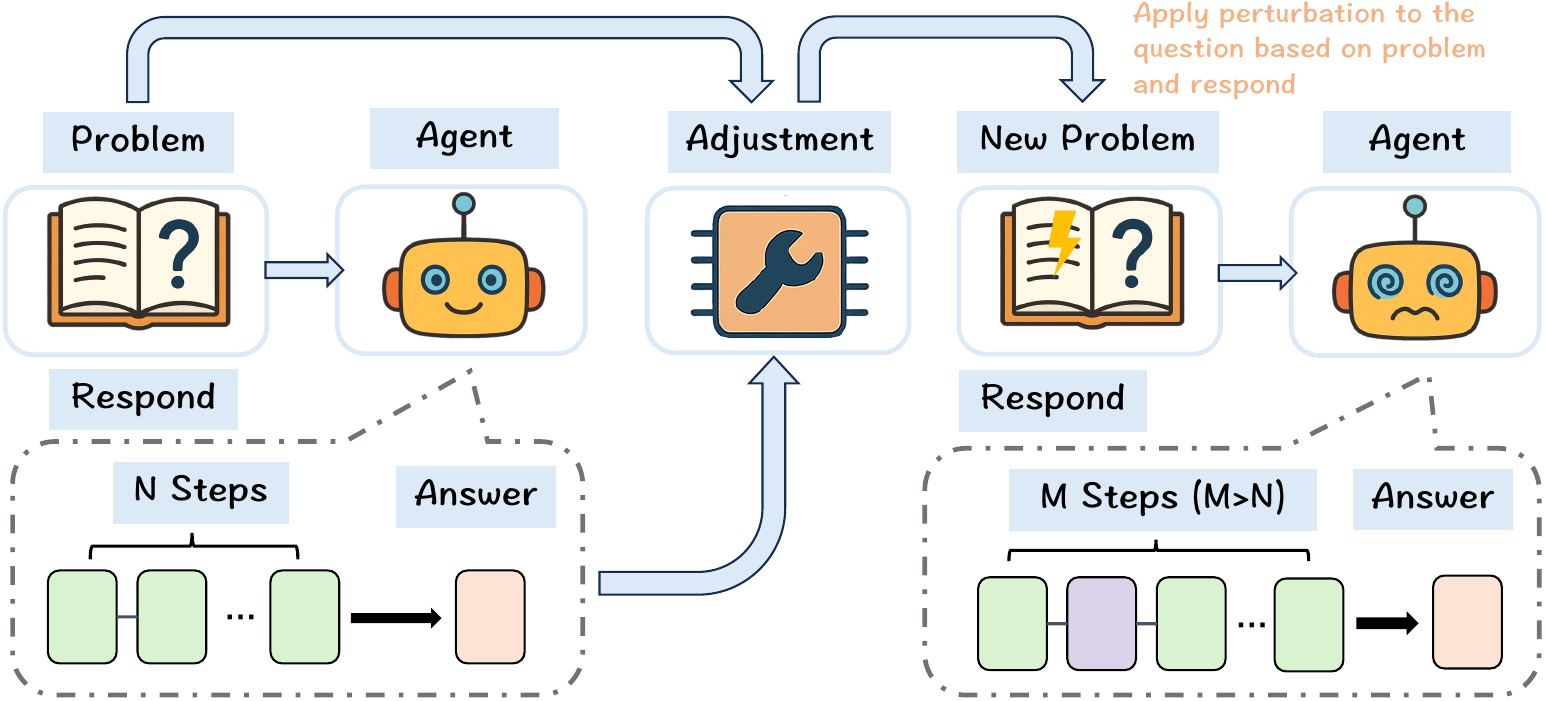}
    \caption{Overview of \textsc{MorphoBench}.}
    \label{fig:teaser}
    \vspace{-4mm}
\end{figure}

\begin{figure*}[h]
    \centering
    \includegraphics[width=0.89\linewidth]{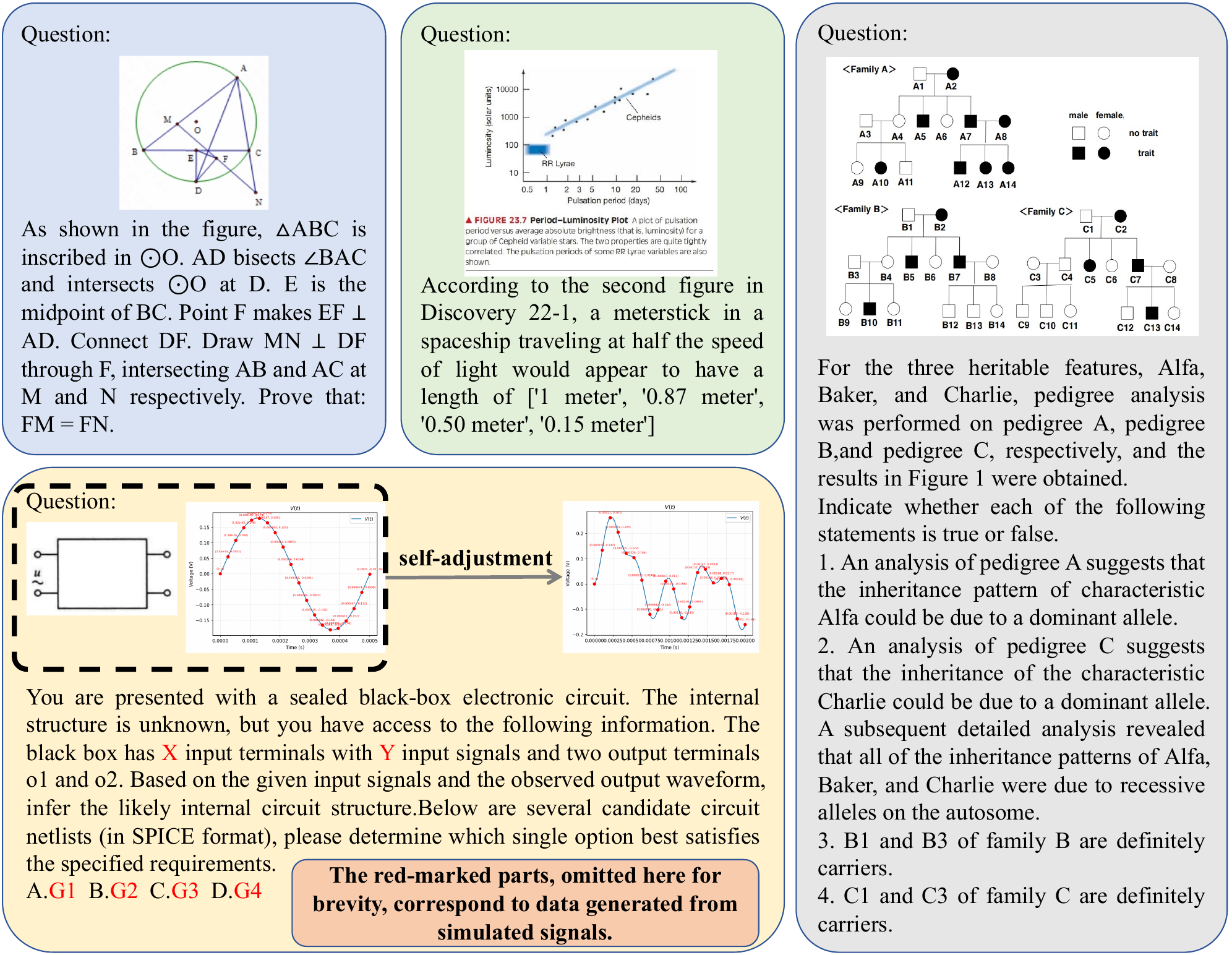}
    \caption{Testing examples from \textsc{MorphoBench}.}
    \label{fig:bench_examples}
    \vspace{-2mm}
\end{figure*}

In recent years, large-scale pre-trained models have achieved remarkable progress, demonstrating unprecedented capabilities across natural language processing, code generation, and multimodal understanding~\cite{devlin2019bert,achiam2023gpt,guo2024large,bai2023qwen,guo2025deepseek,chen2025sft}. Besides, there is a growing emphasis on strengthening their reasoning capabilities, especially in specialized academic domains such as mathematics, physics, logic, and related fields~\cite{zhou2024jiuzhang3,muennighoff2025s1,liu2023evaluating,xu2025medagentgym}. This shift reflects the broader ambition of artificial intelligence: to move from surface-level understanding to robust and generalizable reasoning.

To effectively evaluate large models, several benchmarks such as MME-Reasoning~\cite{yuan2025mme}, SeePhys~\cite{xiang2025seephys}, and HLE~\cite{phan2025humanity} have been proposed to measure reasoning abilities. Some models have even achieved gold-medal performance in competitions like the IMO~\cite{huang2025gemini25procapable} and IPHO~\cite{qiu2025physicssupernovaaiagent}. However, these benchmarks are static and cannot adapt to changes in a model’s reasoning proficiency. Moreover, although specialized agents may perform well in certain domains such as the IMO or IPHO, the coverage of current reasoning benchmarks remains narrow, as most focus on mathematics or physics problems. Even HLE~\cite{phan2025humanity}, while partially intended for reasoning assessment, still includes a large portion of simple or perception-based tasks rather than genuine multi-step reasoning.
Many existing benchmarks also rely on obscure or domain-specific knowledge, which tends to overestimate factual recall instead of true reasoning ability. Genuine reasoning should be evaluated through problems that involve complex logical inference based on simple or universally understood knowledge rather than the memorization of rare concepts. Therefore, a benchmark capable of dynamically adjusting difficulty according to a model’s reasoning ability, while covering multiple academic domains and emphasizing reasoning over knowledge rarity, is essential for accurate and stable evaluation.

To address these limitations, we propose \textsc{MorphoBench}, a multi-disciplinary reasoning benchmark with difficulty adaptive to model performance. Unlike existing benchmarks, \textsc{MorphoBench} dynamically adjusts question difficulty along two key dimensions: understanding conditions and constructing reasoning chains, enabling fair and comparable evaluation across models of different proficiency levels. It achieves this by modifying key statements within the model’s reasoning process, varying the clarity of problem conditions and introducing either guiding hints or distracting information to regulate reasoning complexity. We further conduct a fine-grained categorization and statistical analysis of problem types to support continuous updates and enhance multi-domain diversity. These designs position \textsc{MorphoBench} as a foundation for next-generation reasoning evaluation, fostering a transition from domain-specific competence to general reasoning toward AGI.

The main contributions of this paper can be summarized as follows:

\begin{itemize}
    \item We introduce \textsc{MorphoBench}, a novel benchmark that includes complex, reasoning-intensive problems in multiple disciplines. The benchmark supports adaptive difficulty calibration based on the model’s reasoning process, enabling fair and comparable evaluation across models with different levels of reasoning ability.

    \item \textsc{MorphoBench} changes the difficulty of evaluation questions along two dimensions: recognizing the given conditions and the reasoning process of the problem. \textsc{MorphoBench} identifies critical points in the model's problem-analysis process and adjusts questions accordingly. This approach leads to a more accurate and effective evaluation of reasoning capabilities.

    \item We also offer a more detailed breakdown of problem types and study how often each kind appears. This helps guide future updates to the benchmark. The design increases diversity and allows for a fuller assessment of model abilities, which moves us closer to AGI.
\end{itemize}

\begin{figure*}[t]
    \centering
    \includegraphics[width=0.92\linewidth]{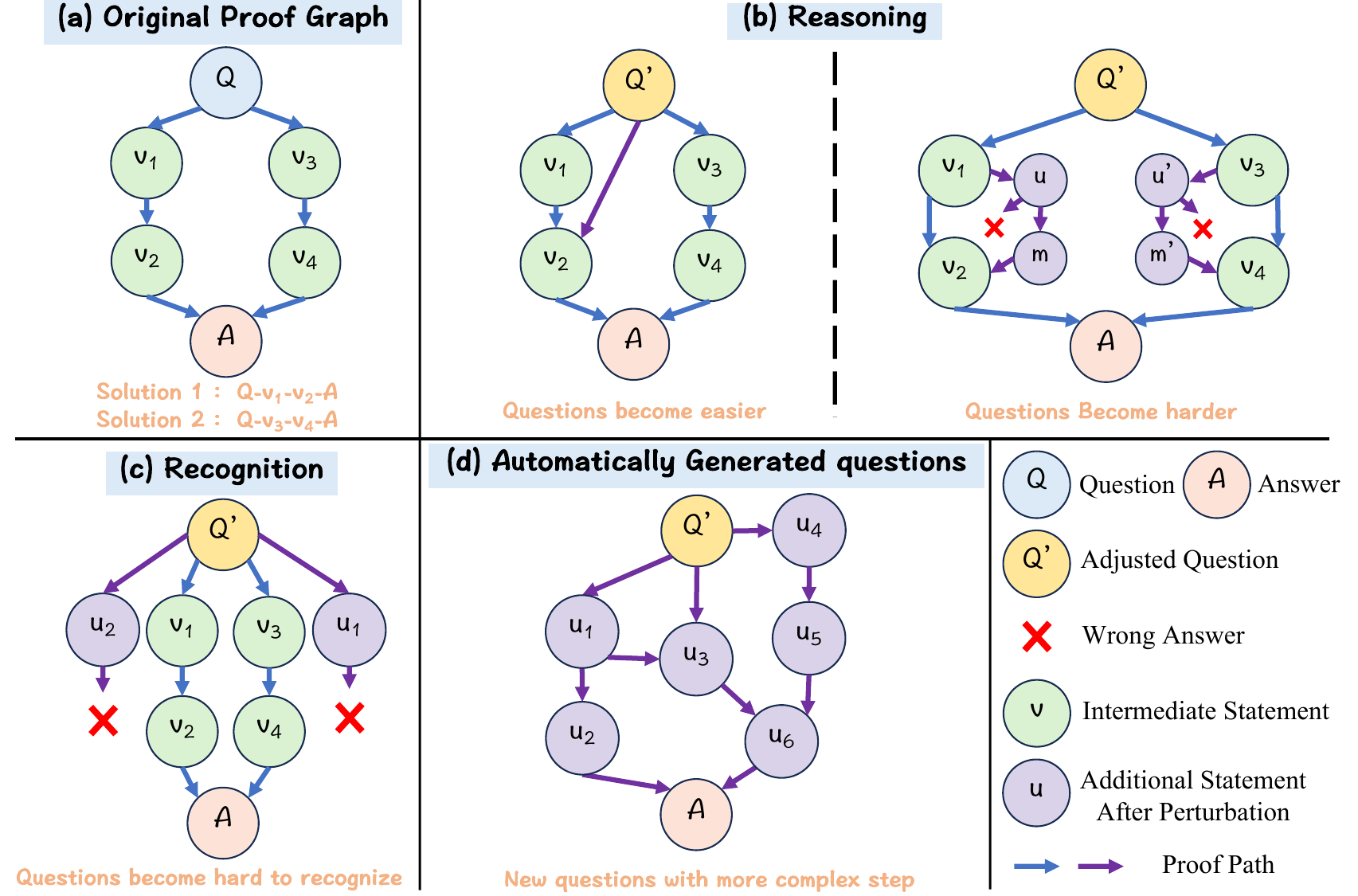}
    \caption{Demonstration of \textsc{MorphoBench}'s problem difficulty adjustment pipelines.}
    \label{fig:adjusted_pipes}
\end{figure*}

\section{Related Work}

\subsection{Large Models}

The Transformer architecture~\cite{vaswani2017attention} revolutionized AI by introducing self-attention, enabling efficient sequential processing and inspiring large-scale models. Subsequent works~\cite{radford2018improving, radford2019language, achiam2023gpt, bai2023qwen, touvron2023llama, brown2020language, nie2025large, zhu2025llada, liu2024deepseek, luo2024llm, shen2025let} expanded model scale to billions of parameters, achieving state-of-the-art NLP performance. Vision-Language Models~\cite{liu2023visual, wang2024qwen2, ye2024mplug, li2020unimo, li2022blip, li2023blip2, lin2023video, chen2024internvl, shi2025mavors, you2025llada, guo2025seed1, an2025unictokens, an2024mc, lin2025perceive} integrate vision and text, enabling multimodal understanding and generation. Recent efforts~\cite{openai2025o3, openai2025gpt5, comanici2025gemini, guo2025deepseek, guo2024deepseek, chen2025mint, su2025openthinkimg, bai2025multi, qiu2025physics, liang2025multimodal} enhance reasoning abilities, enabling logic, causality, and decision-making across complex tasks. These advances require sophisticated training for generalization. Thus, evaluating large-models capabilities remains crucial.

\subsection{Evaluation Benchmark for Large Models}

Evaluating large models requires robust benchmarks that truly reflect their capabilities~\cite{hendrycks2020measuring, wang2024mmlu}. As models evolve, specialized benchmarks for multimodal understanding and reasoning become increasingly necessary. The MME suite~\cite{lu2023mathvista, yu2023mm, yue2024mmmu, zhang2024vinoground, fu2025video, shi2025mme, hu2025video} addresses this by offering tasks that test integration and reasoning across visual and textual modalities. Further, reasoning-focused benchmarks~\cite{zheng2025livecodebench, yuan2025mme, guo2025r} evaluate complex reasoning tasks, while domain-specific ones~\cite{phan2025humanity, ruan2025mme, shen2025phyx, xiang2025seephys, li2024mmsci} assess specialized QA abilities. Yet, current benchmarks cannot adapt to models’ reasoning performance, making fair evaluation across varying capabilities a persistent challenge.

\section{\textsc{MorphoBench}}
\subsection{Data Collection}

To comprehensively evaluate the reasoning capabilities of large-scale models across disciplines, \textsc{MorphoBench} collects and standardizes questions requiring explicit reasoning from diverse academic sources, integrating questions from three sources to ensure coverage of diverse domains and reasoning styles, as shown in Fig.~\ref{fig:bench_examples}.

\textbf{(1) Open-source benchmarks.} Since several existing datasets already contain reasoning-oriented questions, we selectively incorporate such items 
 from \emph{Humanity’s Last Exam (HLE)} \cite{phan2025humanity} and \emph{MME-Reasoning} \cite{yuan2025mme}, which respectively provide 120 domain-spanning questions (physics, mathematics, computer science/AI, biology/medicine, and chemistry) and 100 questions targeting inductive, deductive, and abductive reasoning in multimodal settings. A subset of historical reasoning items is additionally drawn from \emph{HistBench}~\cite{qiu2025path}.

\textbf{(2) Olympiad-level competition problems.} To extend beyond existing benchmarks, which do not fully cover many challenging problems requiring complex reasoning, we collect high-difficulty questions across mathematics, physics, and chemistry. Specifically, mathematics items are drawn from competitions such as the Chinese Mathematical Olympiad (CMO), Putnam, IMO, and USAMO, while physics and chemistry problems are sourced from national Olympiads, including the Chinese Physics Olympiad (CPHO) and Chinese Chemistry Olympiad (CCO), as well as advanced high-school examinations.

\textbf{(3) Expert-designed complex reasoning scenarios.} We further construct new reasoning questions through automatic generation based on human-written templates, targeting tasks such as black-box circuit experiments or character recognition with distractors. The correct answers to these questions are determined by simulation software to ensure objectivity and reproducibility. The generation pipeline are described in Sec.~3.3.

All collected or generated questions from diverse disciplines were standardized following a unified style guide. Each question–answer pair underwent at least two rounds of expert review to verify accuracy, clarity, and metadata consistency. Ambiguous or low-quality items were removed after adjudication.


\subsection{Preliminary Analysis}

To illustrate how question difficulty can be systematically adjusted according to the reasoning capabilities of large-scale models, we first define the difficulty levels of questions in \textsc{MorphoBench}.
Recent LLMs increasingly demonstrate planning-like behaviors, 
outlining intermediate steps before producing the final solution.
\cite{gui2025hypertreeplanningenhancingllm,rawat2025preactmultistepplanningreasoning} Inspired by this observation, we formalize the solving process as a search problem on a \emph{directed proof graph}.\cite{wei2023chainofthoughtpromptingelicitsreasoning,yao2023treethoughtsdeliberateproblem} and analyze how the complexity of this graph, which reflects the model’s reasoning depth and branching structure, can be adjusted to control the difficulty of a question.

\subsubsection{Reasoning as Path Search in a Proof Graph}

For a reasoning question~$Q$, we construct a \textit{directed proof graph}
\begin{equation}
G_{Q} = (V,E,c).
\end{equation}

Each vertex $v \in V$ encodes an intermediate statement or subconclusion encountered during the reasoning process. Each directed edge $e = (v,v') \in E$ represents a single logically valid inference step. The edge weight $c(e)>0$ quantifies the expected computational cost, i.e., the difficulty for an LLM to move directly from state~$v$ to~$v'$ without additional intermediate reasoning statement.

The start vertex $s(Q)$ corresponds to the original problem statement, while the terminal vertex $t(Q)$ denotes the fully verified answer. For any path 
\begin{equation}
\pi = (v_{0},\dots,v_{k})
\quad\text{with}\quad
v_{0}=s(Q),\;v_{k}=t(Q),
\end{equation}
the accumulated cost is
\begin{equation}
\label{eq:cost}
\mathrm{Cost}(\pi) = \sum_{i=0}^{k-1} c\bigl(v_{i},v_{i+1}\bigr)
\end{equation}

The intrinsic difficulty of~$Q$ is defined as the expected cost of correctly deriving the answer over all valid reasoning paths from $s(Q)$ to $t(Q)$, weighted by their likelihoods under the model’s reasoning policy:
\begin{equation}
\label{eq:LQ}
\begin{aligned}
L(Q) 
&= \mathbbm{E}_{\pi \sim P(\pi \mid Q)}\bigl[\mathrm{Cost}(\pi)\bigr] \\
&= \sum_{\pi: s \to t} P(\pi \mid Q)\,\mathrm{Cost}(\pi)
\end{aligned}
\end{equation}

where $P(\pi \mid Q)$ denotes the model-assigned probability of following a valid reasoning path~$\pi$ given the question~$Q$.  
This expectation-based definition captures both the computational costs of individual inference steps and the diversity of plausible reasoning trajectories.

Intuitively, a direct jump from $v_{A}$ to $v_{B}$ may carry an extremely high cost, reflecting the model’s difficulty in performing a single, large inference leap. By contrast, a ``clever'' solution path, for example,
$v_{A}\to v_{1}\to v_{2}\to v_{B}$ may achieve a much lower total cost because it decomposes the reasoning into several simpler inference steps.

\begin{figure*}[t]
    \centering
    \includegraphics[width=0.92\linewidth]{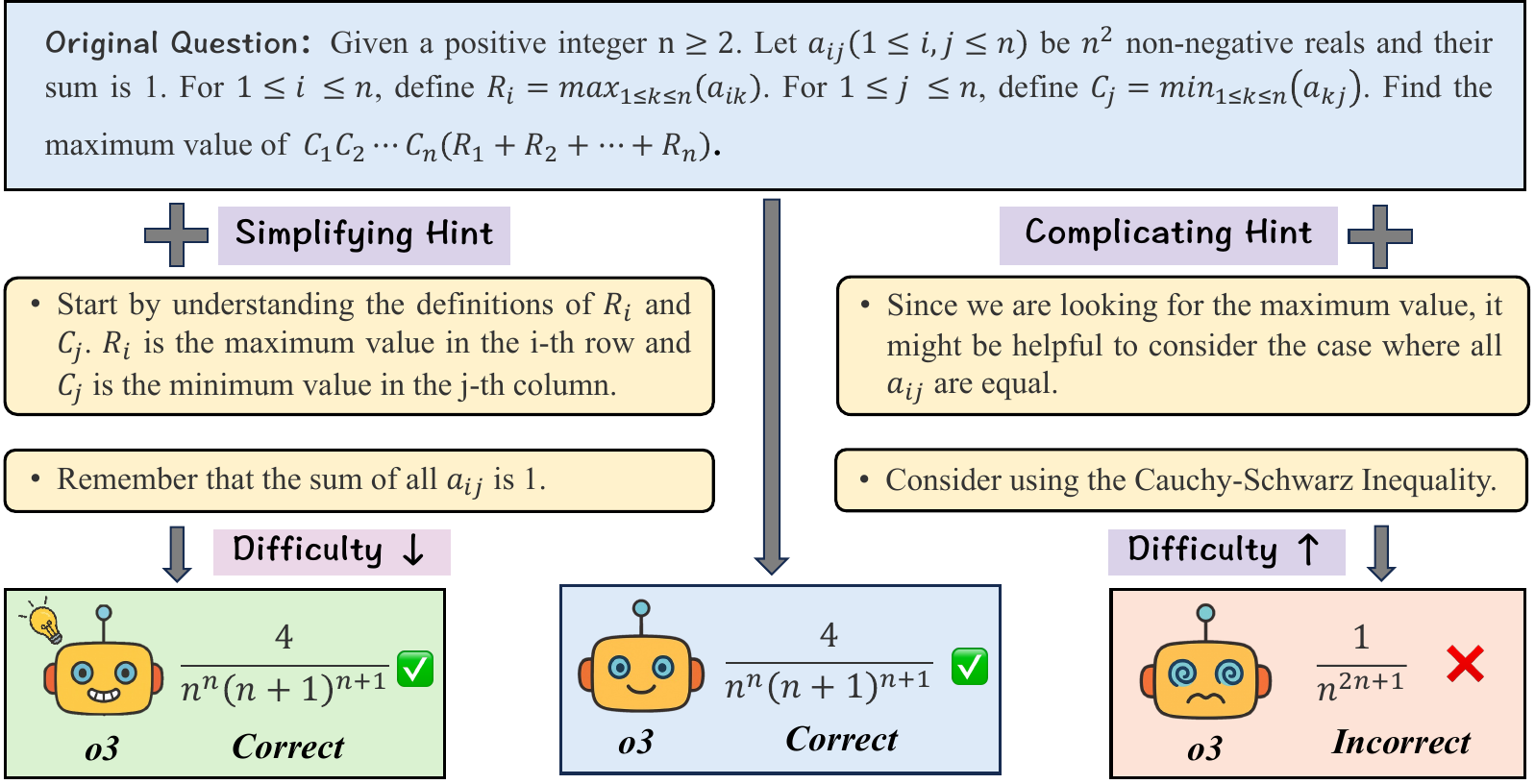}
    \caption{Different large models' reasoning results on \textsc{MorphoBench}.}
    \label{fig:exp_sample}
\end{figure*}

\subsubsection{Question Modification and Information Gap}
After defining the intrinsic difficulty of a question, We proceed to formalize the impact of question modification on reasoning difficulty.  

Let $\mathcal{R}$ be a modification algorithm that appends a hint $\tau$ to the original question, yielding $Q'=\mathcal{R}( Q,\tau)$. With respect to the target answer $A$, the \emph{information gap} of this modification is
\begin{equation}
\label{eq:deltaI}
\Delta I = K(A\mid Q')-K(A\mid Q),
\end{equation}
Here we use conditional Kolmogorov complexity $K(A\mid Q)$ to capture the effective complexity of producing the answer given the question, which directly corresponds to the model’s reasoning difficulty. Intuitively, the larger the information gap between $Q$ and $A$ , the more difficult the reasoning task becomes.  

A modification with $\Delta I \le 0$ is helpful or redundant, as it can reduce the search depth of the proof graph by providing intermediate constraints or decompositions, thereby lowering the path cost $L(Q)$; in contrast, $\Delta I > 0$ indicates a misleading or irrelevant adjustment.

Therefore, the following analysis primarily focuses on the effect of such adjustments with $\Delta I>0$ on the difficulty of solving the problem.

\subsubsection{Impact of Modifications on Reasoning Complexity}

To characterize how such misleading modifications increase reasoning difficulty, we define $Fail(Q,B)$ as the event that the agent exhausts budget $B$ before reaching $t(Q)$.

As detailed in Appendix~\ref{app:proofs}, the modification algorithm 
$\mathcal{R}$ can inject a large number of indistinguishable spurious outgoing edges into the proof graph, thereby inflating the cost of searching along the reasoning path.  
In this view, a positive information gap naturally corresponds to an expansion of the effective search space, since additional misleading edges increase the expected traversal cost along the optimal path.  
Building upon this abstraction, the misleading perturbations introduced by $\mathcal{R}$, together with the increased structural complexity of the graph,
imply that, for any fixed compute budget $B$, 
the failure probability of the perturbed problem is strictly larger than that of the original problem:
\begin{equation}
\Pr\bigl[Fail(Q',B)\bigr]-\Pr\bigl[Fail(Q,B)\bigr]\;>\;0.
\end{equation}
Based on the preliminary analysis of the difficulty adjustment, below we introduce the specific strategies we employ for this purpose.

\subsection{Difficulty Adaptation}
\label{sec:diff_adjust}

\paragraph{Adaptation based on agent reasoning.}
Shaping the agent reasoning process itself is a direct and effective way to control problem difficulty and widen the gap between question and answer. As shown in Fig.~\ref{fig:adjusted_pipes} (b), we adjust difficulty by introducing hints into key reasoning statements: simple hints lower difficulty, while complex hints raise it. To systematically manage this process, we construct the proof graph, where intermediate conclusions are modeled as lemmas. Lemma improvement operates in two ways: (1) adding or modifying hints at the lemma level, making certain reasoning steps either more explicit or more implicit; (2) structural operations, such as pruning lemmas to reduce exploration breadth or extending lemma chains to increase reasoning depth. The  algorithm not only enables dynamic control of problem complexity, but also makes lemma construction more interpretable and actionable, supporting finer-grained difficulty evolution.

\begin{table*}[htbp]
\centering
\begin{tabular}{l*{5}{c}}
\toprule
 & Mathematics & Engineering & Natural Sciences & Social Sciences & Other \\
\midrule
Total (share) & 552 (42.23\%) & 220 (16.83\%) & 250 (19.13\%) & 91 (6.96\%) & 194 (14.85\%) \\
Acc(\%) & 53.26 & 37.73 & 34.40 & 56.04 & 41.75 \\
\bottomrule
\end{tabular}
\caption{Subject-wise performance of \textit{o3} on \textsc{Morpho}‑v0. 
\textbf{Total (share)} indicates the number of questions and their proportion within the full dataset (N = 1307), 
while \textbf{Acc (\%)} reports the model’s accuracy for each subject category.}
\label{tab:subject_metrics_new}
\end{table*}

\paragraph{Adaptation based on agent recognition.}
\textsc{MorphoBench} increases the reasoning cost between questions and answers by perturbing the visual cues most critical to the model, making the model more prone to reasoning errors as illustrated in Fig.~\ref{fig:adjusted_pipes} (c). Instead of relying on predefined annotations, the model itself first indicates which elements it considers essential. These elements are then deliberately obfuscated at the text level, for example by introducing ambiguous wording or partially masking key terms, thereby hindering precise interpretation. Unlike random textual noise, such agent-driven perturbations directly target the linguistic features most relied upon, making them more challenging. If the model continues to answer correctly under these conditions, it demonstrates strong robustness and generalization; conversely, performance degradation reveals over-dependence on localized textual cues. This strategy thus provides a principled means of difficulty adjustment, testing whether the model remains effective when its key features are perturbed.

\begin{table*}[htbp]
  \centering
  \begin{tabular}{lccc|cc} 
    \toprule
    Model & \textsc{R}(Lite) & \textsc{Morpho}-v0 & \textsc{R}(Complex) & \textsc{Morpho}-v0$^*$ & \textsc{P}(Perturbed) \\
    \midrule
    claude4          & $33.55 \pm 1.66$ & $29.22 \pm 2.27$ & $20.88 \pm 1.43$  & $25.84 \pm 3.93$  & $22.90 \pm 3.77$  \\
    gemini-2.5-flash & $39.10 \pm 1.87$ & $35.65 \pm 2.60$ & $31.71 \pm 1.78$  & $38.24 \pm 4.37$  & $32.77 \pm 4.22$  \\
    gemini-2.5-pro   & $39.67 \pm 1.88$ & $34.66 \pm 2.58$ & $32.33 \pm 1.79$  & $36.76 \pm 4.33$  & $35.92 \pm 4.31$  \\
    gpt5             & $52.22 \pm 1.91$ & $45.33 \pm 2.70$ & $37.68 \pm 1.86$  & $48.95 \pm 4.49$  & $43.28 \pm 4.45$  \\
    grok4            & $29.70 \pm 1.61$ & $25.99 \pm 2.19$ & $23.79 \pm 1.50$  & $31.51 \pm 4.17$  & $28.57 \pm 4.06$  \\
    o3               & $48.24 \pm 1.92$ & $45.52 \pm 2.70$ & $35.85 \pm 1.84$  & $45.59 \pm 4.47$  & $40.55 \pm 4.41$  \\
    o4-mini          & $41.51 \pm 1.89$ & $37.72 \pm 2.63$ & $30.57 \pm 1.77$  & $46.22 \pm 4.48$  & $39.71 \pm 4.4$  \\
    \bottomrule
  \end{tabular}
  \caption{Model performance comparison across progressive versions of the \textsc{Morpho} benchmark.: 
  \textsc{Morpho-R}(Lite), \textsc{Morpho}-v0, \textsc{Morpho-R}(Complex), 
  \textsc{Morpho}-v0$^*$, and \textsc{Morpho-P}(Perturbed). 
  Here, \textsc{Morpho}-v0$^*$ refers to a subset containing only the 476 multimodal questions.}
  \label{tab:exp_results_diff}
\end{table*}

\paragraph{Adaptation for automatically generated questions.}
In \textsc{MorphoBench}, automatic question generation involves two central challenges: ensuring validity and regulating difficulty, as demonstrated in Fig.~\ref{fig:adjusted_pipes} (d). To guarantee validity, we incorporate external simulation software, such as circuit simulators, to systematically verify the correctness of generated outputs. To regulate difficulty, we adjust key generation parameters. Specifically, in circuit black-box tasks, difficulty is modulated by varying the number of exposed terminals, with a larger number increasing the complexity of inferring the internal structure. In “spot the different one” tasks, difficulty is controlled either by selecting character pairs with higher visual similarity or by expanding the grid size, thereby imposing greater demands on visual discrimination.
These mechanisms allow \textsc{MorphoBench} to evolve difficulty automatically: as terminal counts or grid complexity grow, the tasks become progressively harder. This enables continuous challenge for models and supports scalable evaluation of reasoning and multimodal understanding.

\subsection{Category Expansion}

To ensure broad coverage across disciplines, we assign structured attributes to problems and organize them into a three-level tree: task type (perception, retrieval, reasoning), knowledge dependence (closed, open, hybrid), and fine-grained skill categories (e.g., arithmetic, geometry, flow). This hierarchical design avoids over-concentration in a single dimension and makes the benchmark more representative. 

We iterate by setting per-leaf quotas and targeted collection for sparse leaves. This disciplined assignment and rebalance loop expands breadth while preserving difficulty structure, keeping benchmark diversity controllable over time.

\section{Experiment}

\subsection{Implementation Details}

To benchmark top-tier reasoning performance, we evaluate leading frontier models—Gemini-2.5-Flash, Gemini-2.5-Pro, GPT-5, Grok-4, Claude-4, and the OpenAI o-series (\textit{o3}, \textit{o4-mini})—which embody the current state of the art in complex reasoning and problem-solving.

We first benchmark all models on the original dataset \textsc{Morpho-v0} and perform discipline‑level analysis across mathematics, engineering, natural sciences, and social sciences. 
Then, three types of difficulty adaptation are applied on \textsc{Morpho-v0}:

\textbf{Agent-reasoning adaptation:}
We derive three variants from \textsc{Morpho-v0}: a simplified version \textsc{Morpho-R}(Lite) with lower reasoning complexity, and a challenging version, \textsc{Morpho-R}(Complex), where lemma hints are modified to control reasoning depth.

\textbf{Agent‑recognition adaptation:} from the original benchmark, we derived \textsc{Morpho‑P}(Perturbed) by perturbing critical textual and visual cues in 476 multimodal samples to assess model robustness under perception disturbance.  

\textbf{Automatic‑generation adaptation:} we further generated a series of graded circuit‑reasoning datasets, collectively denoted as \textsc{Morpho‑G}, by varying the number of terminals in black‑box circuit questions.  

\subsubsection{Evaluation Metric}

We assess model performance on \textsc{MorphoBench} and all its variants using \textbf{accuracy}.  
Accuracy measures the proportion of correctly answered questions and is defined as:
\begin{equation}
\label{eq:accuracy}
\text{Acc} \;=\; \frac{1}{N}\sum_{i=1}^{N} \mathbbm{1}\!\bigl[\hat{y}_{i}=y_{i}\bigr],
\end{equation}
where $N$ is the number of evaluated items, $y_{i}$ is the ground‑truth answer, and $\hat{y}_{i}$ is the model prediction.  
Answer correctness is automatically determined using the \textit{o3‑mini} model, ensuring a consistent and scalable evaluation across all difficulty variants.

\subsection{Main Comparison Results}

\begin{table*}[htbp]
  \centering
  \renewcommand{\arraystretch}{1.0}
  \setlength{\tabcolsep}{5pt}
  \resizebox{0.9\linewidth}{!}{%
    \begin{tabular}{ll*{10}{c}}
      \toprule
      & & \multicolumn{10}{c}{\textbf{Difficulty Level}} \\
      \cmidrule(lr){3-12}
      \textbf{Model} & \textbf{Metric} & 1 & 2 & 3 & 4 & 5 & 6 & 7 & 8 & 9 & 10 \\
      \midrule
      \textbf{o3} & Acc. (\%) & 48.3 & 30.0 & 48.0 & 23.1 & 40.7 & 39.3 & 54.2 & 57.7 & 44.0 & 34.8 \\
      \textbf{Gemini-2.5-Pro} & Acc. (\%) & 75.9 & 36.7 & 16.0 & 7.7 & 0.0 & 7.1 & 12.5 & 7.7 & 0.0 & 13.0 \\
      \bottomrule
    \end{tabular}
  }
  \caption{Model performance of o3 and Gemini-2.5 Pro on the \textsc{Morpho-G}. The circuit black-box problem is a single-choice question with six options in total.}
  \label{tab:different-level-eval-cir-horizontal}
  \vspace{-2mm}
\end{table*}

\paragraph{Cross-disciplinary reasoning performance.}
We selected o3, the best-performing model overall, as the representative for our cross-domain analysis. As shown in Table~\ref{tab:subject_metrics_new}, o3 attains the highest accuracy in social sciences (56.04\%), followed by mathematics (53.26\%) and other tasks (41.75\%), while its performance is lower in engineering (37.73\%) and natural sciences (34.40\%). This updated ordering highlights a more nuanced imbalance in cross-disciplinary reasoning: Frontier models exhibit strong robustness on tasks centered on textual representation and conceptual reasoning, while showing limitations when confronted with reasoning scenarios that demand symbolic derivation, precise quantitative manipulation, or domain-specific and expert-designed challenges. For a more detailed breakdown, see the Appendix~\ref{app:Cross-disciplinary}.

\paragraph{Influence of adjustment based on agent recognition and reasoning.}

To validate the effectiveness of our question modifications, we test existing state-of-the-art methods on the difficulty-adjusted data. As shown in Table~\ref{tab:exp_results_diff}, it is immediately evident that all models answer the questions more accurately when the questions become easier, while their performance deteriorates as the difficulty of the question increases.
Among them, o3 demonstrates the strongest performance, confirming its robust multimodal recognition and reasoning capabilities. However, although o3 outperforms GPT-5 on the original \textsc{MorphoBench} questions, GPT-5 exhibits a significantly smaller performance degradation when questions become more challenging, indicating that GPT-5 possesses more stable analytical abilities and knowledge reserves.

Additionally, results in Table~\ref{tab:exp_results_diff} clearly show that recognition-focused adjustments continue to affect model reasoning, though their impact remains smaller than that of adjustments targeting reasoning capacity. This suggests that in evaluations emphasizing strong reasoning skills, logical-level guidance exerts a greater influence on model thinking.

\paragraph{Influence of adjustment for automatically generated questions.}


For the circuit black-box tasks, we conducted evaluations on o3 and Gemini-2.5-Pro. Before testing, we systematically defined difficulty levels for black-box problems. Specifically, the difficulty was divided into ten levels based on the number of external terminals. Each level corresponds to the number of input terminals on the black box, which in turn specifies the number of alternating current (AC) voltages simultaneously applied to these terminals. As the number of terminals increases, the reasoning process becomes inherently more complex, resulting in progressively more challenging tasks. The experimental results are summarized in Table \ref{tab:different-level-eval-cir-horizontal}.

\begin{figure}[t]
    \centering
    \includegraphics[width=1\linewidth]{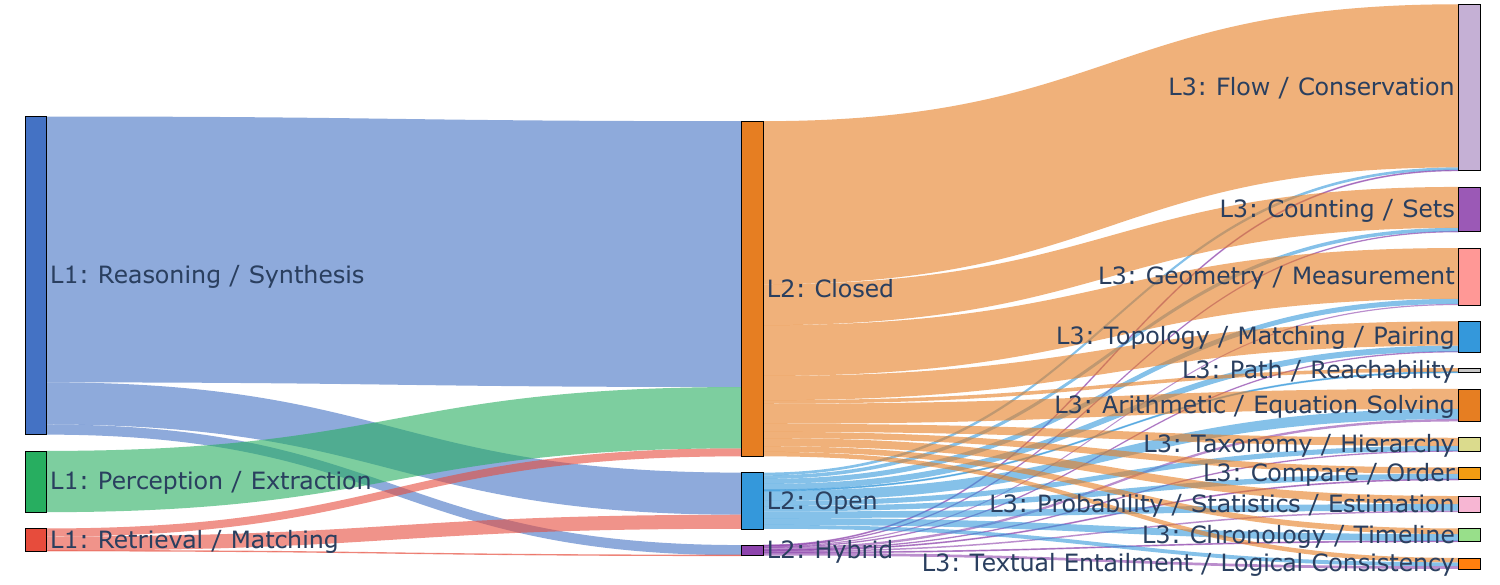}
    \caption{Diversity analysis of \textsc{MorphoBench}.}
    \label{fig:bench_diversity}
    \vspace{-4mm}
\end{figure}

As shown in the results, difficulty stratification strongly affects Gemini-2.5-Pro: as difficulty increases from level 1 to 10, its accuracy drops sharply from 75.9\% to 0–13\%, remaining low at higher levels. In contrast, o3’s accuracy fluctuates between 30\% and 58\% without a clear downward trend. This shows that the designed difficulty partition effectively suppresses Gemini-2.5-Pro’s performance, confirming the sensitivity of the difficulty design, while o3 exhibits weaker sensitivity. The difference likely results from distinct training distributions and inference strategies, as o3 can utilize external tools for analysis and problem solving, whereas Gemini-2.5-Pro aligns more closely with the intended progressive difficulty response.

\paragraph{Diversity analysis.}

The classification results in Fig.\ref{fig:bench_diversity} show that reasoning tasks are predominant, while all three top-level categories remain well represented. This ensures the benchmark includes both problems solvable through prompt-only evidence and those requiring external knowledge. At the leaf level, the dataset spans a diverse spectrum—from combinatorics and geometry to timeline reasoning and logical entailment.

Following our expansion and rebalancing operations, both hierarchical evenness and entropy show notable improvement, with leaf coverage reaching approximately 60\% of possible taxonomy paths. This validates both the taxonomy's expressiveness and the effectiveness of our balancing policy. For future iterations, we will prioritize problems with Open/Hybrid knowledge closure, retrieval-anchored items, and perception tasks requiring open knowledge. This strategy will help smooth the long-tail distribution while maintaining strong reasoning requirements.

\section{Conclusion}

In this paper, we introduce a new benchmark \textsc{MorphoBench}, which contains a wide variety of questions from multiple disciplines that demand strong reasoning capability. The difficulty of the questions can be adjusted according to the model's level of reasoning ability. Specifically, \textsc{MorphoBench} adjusts question difficulty by adding either positive or negative guidance at key stages of the analytical process, or by modifying the quality of critical information that the model needs to recognize. These adjustments are based on the model's performance during analysis. Additionally, we classify the attributes of the questions in \textsc{MorphoBench} in a more detailed manner, and improve the diversity and comprehensiveness of the benchmark by balancing these attributes across the dataset. We finally carry out rigorous experiments to validate the design and utility of \textsc{MorphoBench}.

\bibliography{custom}

\newpage

\appendix

\appendix
\appendixpage
\addappheadtotoc
\startcontents[sections]
\printcontents[sections]{l}{1}{\setcounter{tocdepth}{2}}
\newpage


\section{More Details about \textsc{MorphoBench}}

\subsection{Details of Taxonomy}


We organize each sample into a three-level taxonomy. For first mentions, we spell out the full name followed by its abbreviation in parentheses. The leaf category of any sample is given by the tuple $\langle \text{L1}, \text{L2}, \text{L3}\rangle$.

\paragraph{Level 1 (L1): Task Nature}
\begin{itemize}
  \item \textbf{Perception / Extraction (PERC).} Low-level understanding and signal extraction from inputs, including recognition, reading diagrams/OCR, locating entities, and basic counting.
  \item \textbf{Retrieval / Matching (RETR).} Locating or aligning information either provided in the prompt/evidence or drawn from external resources/commonsense; emphasis on correspondence and lookup.
  \item \textbf{Reasoning / Synthesis (RSYN).} Multi-step deduction or constraint satisfaction that integrates pieces of evidence (e.g., flow/conservation rules, multi-hop logic chains) to reach a conclusion.
\end{itemize}

\paragraph{Level 2 (L2): Knowledge Closure}
\begin{itemize}
  \item \textbf{Closed (CLO).} The answer is fully determined by the prompt and provided evidence; no outside knowledge is required.
  \item \textbf{Open (OPE).} Solving requires external knowledge beyond what is given (e.g., background facts, domain conventions).
  \item \textbf{Hybrid (HYB).} Primarily evidence-driven but benefits from a small amount of common or world knowledge (e.g., everyday conventions) to bridge gaps.
\end{itemize}

\paragraph{Level 3 (L3): Reasoning Primitive}
\begin{itemize}
  \item \textbf{Flow / Conservation (FLOW).} Applying conservation or balance principles (e.g., circuit KCL/KVL, mass/energy balance, network flow).
  \item \textbf{Path / Reachability (PATH).} Determining connectivity, routes, or shortest hops in graphs, mazes, or grids.
  \item \textbf{Chronology / Timeline (TIME).} Ordering events, aligning dates/eras/dynasties, or constructing consistent timelines.
  \item \textbf{Taxonomy / Hierarchy (TAXO).} Working with classification trees, phylogeny, or family hierarchies to place or infer relations.
  \item \textbf{Probability / Statistics / Estimation (PROB).} Handling uncertainty, intervals, likelihoods, sampling, or simple statistical summaries.
  \item \textbf{Arithmetic / Equation Solving (ARITH).} Performing numeric operations or solving algebraic equations/constraints.
  \item \textbf{Counting / Sets (COUNT).} Basic combinatorics, set relations/operations, and discrete enumerations.
  \item \textbf{Compare / Order (COMP).} Ranking or pairwise comparison tasks (greater/less, sorting by a criterion).
  \item \textbf{Geometry / Measurement (GEOM).} Reasoning about shapes, angles, areas/lengths, units/conversions, and geometric relations.
  \item \textbf{Topology / Matching / Pairing (MATCH).} Assignment, bijection/invariant-based pairing, or structure-preserving correspondence.
  \item \textbf{Textual Entailment / Logical Consistency (ENTAIL).} Checking whether statements are supported, contradicted, or mutually consistent with given text/evidence.
\end{itemize}

Each sample is labeled at all three levels; its leaf label is the concatenation \texttt{L1-L2-L3} (e.g., \texttt{RSYN-CLO-FLOW}). When ambiguity arises, we prioritize (i) the dominant \emph{task nature} (L1), then (ii) \emph{knowledge closure} (L2), and finally (iii) the primary \emph{reasoning primitive} (L3).

\subsection{More Proofs of Question Modification}\label{app:proofs}

\begin{lemma}
Let the original search graph be a single directed path
\[
  P = (v_{0}\to v_{1}\to\cdots\to v_{k}),
\]
which is the unique route from the start vertex $v_{0}$ to the goal
vertex $v_{k}$.
Embed an incompressible binary string $\tau$ of length
$|\tau| = \Delta I$ bits into the graph by attaching $m$ dead-end
(out-degree-one) edges while preserving $P$ as the \emph{only} goal
path.  Then
\[
  m \;\ge\; \Delta I - O(1).
\]
\end{lemma}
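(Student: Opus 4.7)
The strategy is an incompressibility argument in the style of Kolmogorov complexity: because $\tau$ is embedded in the augmented graph $G'=P\cup D$, the embedding must be injective as a function of $\tau$, so there is a constant-size decoder that recovers $\tau$ from a description of $G'$. This gives $K(\tau)\le K(G')+O(1)$, which combined with the incompressibility bound $K(\tau)\ge|\tau|-O(1)=\Delta I-O(1)$ reduces the lemma to showing $K(G')\le m+O(1)$.

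First I would charge the underlying path $P$ to the constant overhead: $P$ is a single directed path of determined length $k$ with fixed endpoints $v_0,v_k$, all of which are part of the problem specification accessible to the decoder, so $K(G')\le K(D)+O(1)$. The crux is then to describe the augmentation $D$ using at most $m+O(1)$ bits. My plan is to fix a canonical in-order traversal of $P$ from $v_0$ to $v_k$ and emit, at each vertex, a short constant-length codeword indicating whether a dead-end is attached there (and, if so, which canonical sink label it leads to under a fixed naming rule compiled into the decoder). Because the positions along $P$ are structural information that the decoder recovers for free, only attached vertices contribute substantively to the encoding, so the sequence of flags can be compressed to essentially one bit per attached edge, yielding $K(D)\le m+O(1)$.

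The main obstacle I expect is exactly this per-edge accounting: a naive description that names each starting vertex explicitly costs $\log(k+1)$ bits per edge and would only prove the much weaker bound $m\gtrsim\Delta I/\log k$. To reach the sharp $m\ge\Delta I-O(1)$ bound, one has to argue carefully that (i) positional data along $P$ is free because the decoder already knows $P$, (ii) any remaining choice in the sink vertex of a dead-end edge is quotiented out by a canonical labelling, and (iii) the resulting code is prefix-free and self-delimiting without hiding an extra $\log m$ or $\log k$ term. Once that encoding is verified, chaining the inequalities
\begin{equation*}
\Delta I - O(1) \;\le\; K(\tau) \;\le\; K(G')+O(1) \;\le\; K(D)+O(1) \;\le\; m+O(1)
\end{equation*}
yields $m\ge\Delta I-O(1)$, completing the argument.
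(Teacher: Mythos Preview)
Your high-level framework --- the incompressibility chain $\Delta I - O(1) \le K(\tau) \le K(G') + O(1) \le m + O(1)$ --- matches the paper's strategy exactly. The gap is in the last inequality, where your direct encoding of the augmentation $D$ does not reach $m+O(1)$ bits. Your plan is to traverse $P$ and emit at each vertex a constant-length codeword indicating whether a dead-end is attached there, then claim this ``compresses to essentially one bit per attached edge.'' You correctly flag this compression as the main obstacle, but your proposed resolution does not clear it: condition~(i) conflates the decoder knowing the path $P$ (the sequence $v_0,\ldots,v_k$) with knowing \emph{where along $P$} the dead-ends sit --- and that attachment profile is precisely the payload you must encode. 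A length-$(k{+}1)$ indicator vector, or more generally the composition $(s_0,\ldots,s_k)$ with $\sum_i s_i=m$, costs about $\log\binom{m+k}{k}\approx m\log(k/m)$ bits when $k\gg m$, so your scheme recovers only the weak $m\gtrsim \Delta I/\log k$ bound you already rejected. Conditions~(ii)--(iii) address sink labels and prefix-freeness and do nothing for this positional cost.

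The paper avoids the bottleneck by a counting argument rather than an explicit per-vertex encoding. Writing $s_i$ for the number of spurious out-edges at $v_i$ (so $\sum_i s_i=m$), it observes that the relevant degree of freedom is not ``where are the dead-ends attached'' but ``which of the $1+s_i$ outgoing edges at $v_i$ is the one continuing along $P$'': this gives $\prod_i(1+s_i)$ possible graphs, and the elementary inequality $1+x\le 2^x$ yields $\prod_i(1+s_i)\le 2^{\sum_i s_i}=2^m$. Injectivity of the fixed embedding $\mathcal{E}$ then forces $K(\tau)\le m+O(1)$ with no dependence on $k$. The whole argument turns on this product-to-exponential step, which is exactly what a flag-per-vertex encoding cannot reproduce.
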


\begin{proof}
Fix a universal prefix Turing machine $U$.  Implicit in the lemma we
assume the embedding is performed by a fixed computable map
$\mathcal{E}:\{0,1\}^{\Delta I}\to\mathcal{G}$ that sends a bitstring
$\tau$ to a graph $G=\mathcal{E}(\tau)$ obtained from $P$ by attaching
$m$ dead-end edges while keeping $P$ as the unique goal path.
This ensures there is a fixed decoding procedure of constant size used
in the complexity argument below.

For each vertex $v_i$ on $P$ let $d_i$ be its out-degree in $G$ and set
$s_i := d_i-1\ge0$; thus
\[
  m \;=\; \sum_i s_i
\]
is the total number of added edges.  At vertex $v_i$ there are exactly
$d_i=1+s_i$ possibilities for which outgoing edge continues along $P$,
so the number of distinct graphs obtainable by choosing, at every
vertex, which outgoing edge is the path-edge is at most
\[
  \prod_i (1+s_i).
\]
Using the inequality $1+x\le 2^x$ (valid for all $x\ge0$) we get
\[
  \prod_i (1+s_i) \;\le\; \prod_i 2^{s_i} \;=\; 2^{\sum_i s_i} \;=\; 2^m.
\]
Hence there are at most $2^m$ distinct graphs that can result from
adding $m$ dead-end edges to $P$ while preserving $P$ as the unique
goal path.

Since $\mathcal{E}$ is a fixed computable embedding, different inputs
$\tau$ must produce different output graphs; therefore the number of
different $\tau$ representable with $m$ added edges is at most $2^m$.
It follows that $\tau$ has Kolmogorov complexity bounded by
\[
  K_U(\tau) \le m + O(1),
\]
where the $O(1)$ term accounts for the fixed-size description of the
decoding routine and the bookkeeping needed to recover $\tau$ from the
index of the graph.

On the other hand, by the incompressibility assumption
$K_U(\tau)\ge \Delta I - O(1)$.  Combining the two bounds yields
$m \ge \Delta I - O(1)$, as claimed.
\end{proof}

\subsection{More Details of Difficulty Adjustment}

\paragraph{Agent Recognition}

In the stage, \textsc{MorphoBench} adopts an image perturbation strategy based on the agent recognition of key visual information to increase task difficulty. We provide existing question–answer pairs to the agent and require it to identify and return the core visual elements within the corresponding images. Subsequently, as shown in Fig.~\ref{fig:example_reco}, we perform text processing on these key pieces of information by obfuscating their textual descriptions in the question and the image, thereby introducing interference at the textual level. 

In the process, we use the agent’s responses as the source of key visual information rather than relying on pre-defined annotations. The motivation is that allowing the model to indicate its most critical visual cues enables a more direct examination of its internal representations and attention mechanisms. In other words, when the visual elements recognized as critical are perturbed, its performance on the same task more faithfully reflects its robustness and reasoning capacity. Compared with externally imposed random noise, such agent-driven perturbations are more targeted and challenging, as they directly affect the features most relied upon. If a VLM continues to produce correct answers under such perturbations, it indicates robust fault tolerance and strong generalization. Conversely, a pronounced decline in performance reveals an excessive dependence on localized features and insufficient holistic understanding. Accordingly, this approach provides a more principled criterion for difficulty adjustment by assessing whether the model remains effective when the key features are perturbed.

\begin{figure}[h]
    \centering
    \begin{subfigure}{0.98\linewidth}
        \centering
        \includegraphics[width=\linewidth]{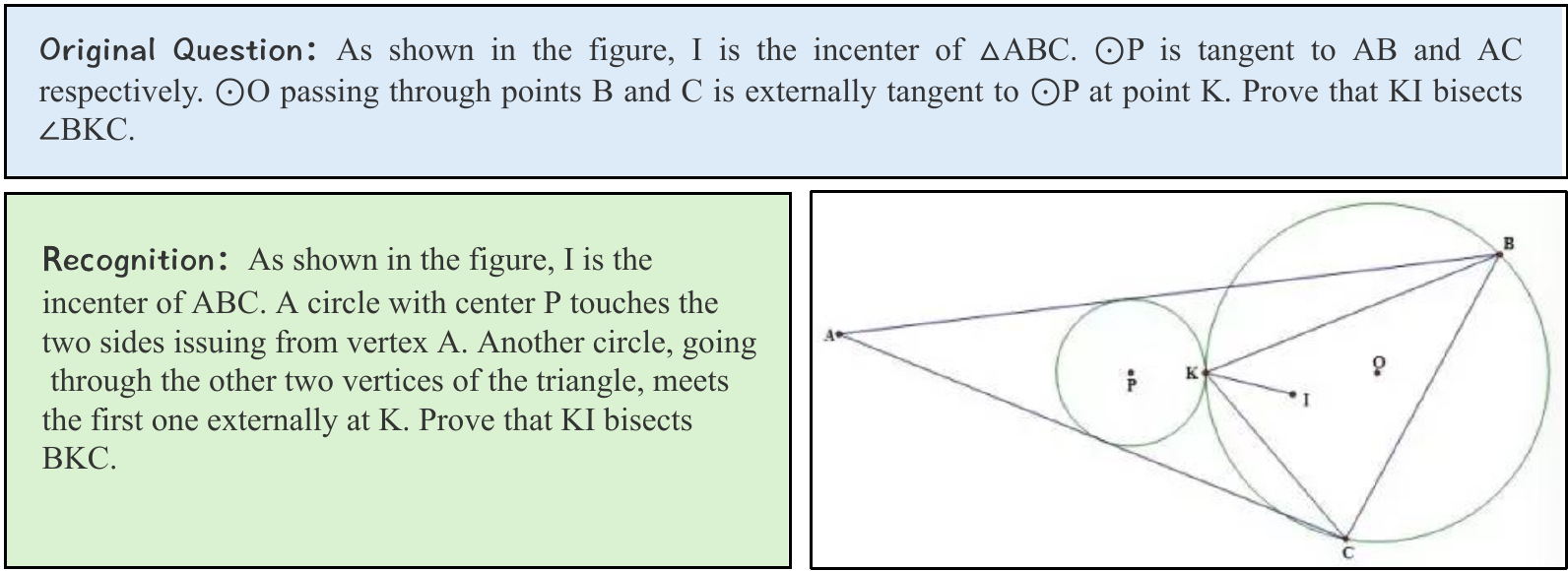}
        \caption{}
    \end{subfigure}

    \vspace{0.5em}

    \begin{subfigure}{0.98\linewidth}
        \centering
        \includegraphics[width=\linewidth]{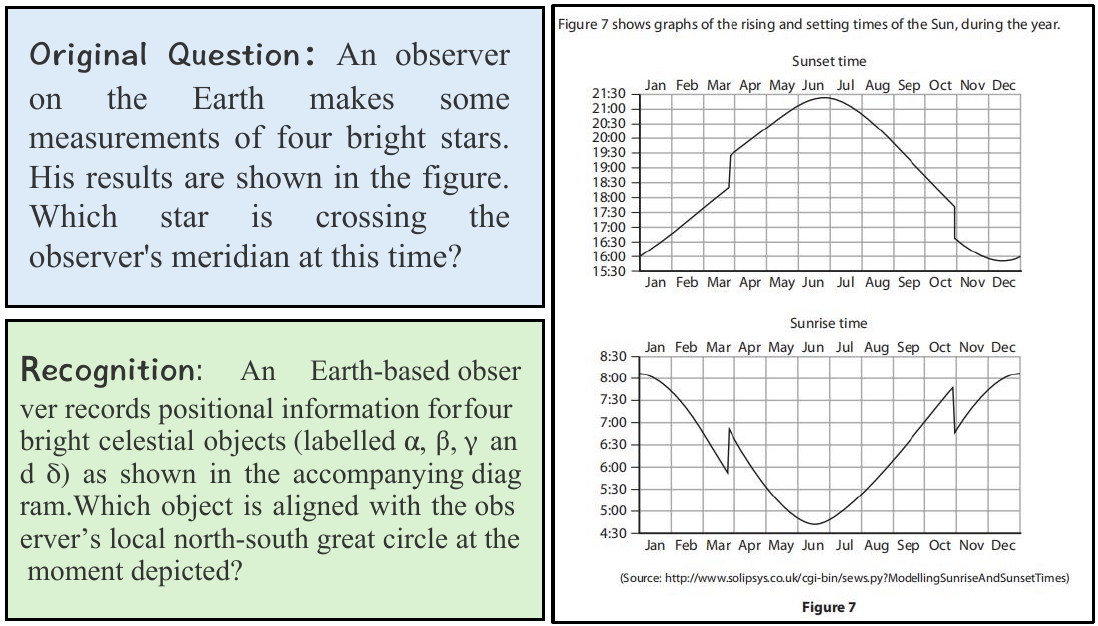}
        \caption{}
    \end{subfigure}

    \caption{Example for Agent Recognition.} 
    \label{fig:example_reco}
\end{figure}

\paragraph{Agent Reasoning}

Difficulty is a central factor in benchmark evaluation, yet it is often challenging to quantify due to its inherent subjectivity. Even when comparing problems within the same domain, it remains difficult to establish a rigorous partial order of difficulty; this challenge is further exacerbated when comparisons span across heterogeneous domains or disciplines. Conventional approaches typically resort to coarse-grained indicators—such as pass@N or weighted sums of chain-of-thought (CoT) lengths—which, while straightforward to compute, largely capture only superficial properties of model performance. Such measures fail to reflect more nuanced dimensions of reasoning, including the difficulty of exploration (the ability to branch into alternative solution paths), retrieval difficulty (the ability to identify relevant knowledge from prior context), and single-step reasoning difficulty (the precision of local logical inference).

To address these limitations, we propose the proof graph $G$. The graph serves as a modality that jointly encodes reasoning depth and exploration breadth, thereby offering a more fine-grained representation of problem-solving complexity. For a given model $M$, we refer to its underlying knowledge system as axioms, while the intermediate conclusions derived throughout the reasoning process are termed lemmas. Formally, the proof graph is defined as $G=(V,E)$, where $V$ denotes the set of lemmas and $E$ represents the directed edges capturing inferential dependencies between them. The reasoning sub-process can thus be viewed as the progressive activation of new lemmas, based on both the initial axioms and previously established lemmas.

\paragraph{Automatically Generated Questions}
\begin{figure*}[h]
    \centering
    \includegraphics[width=0.98\linewidth]{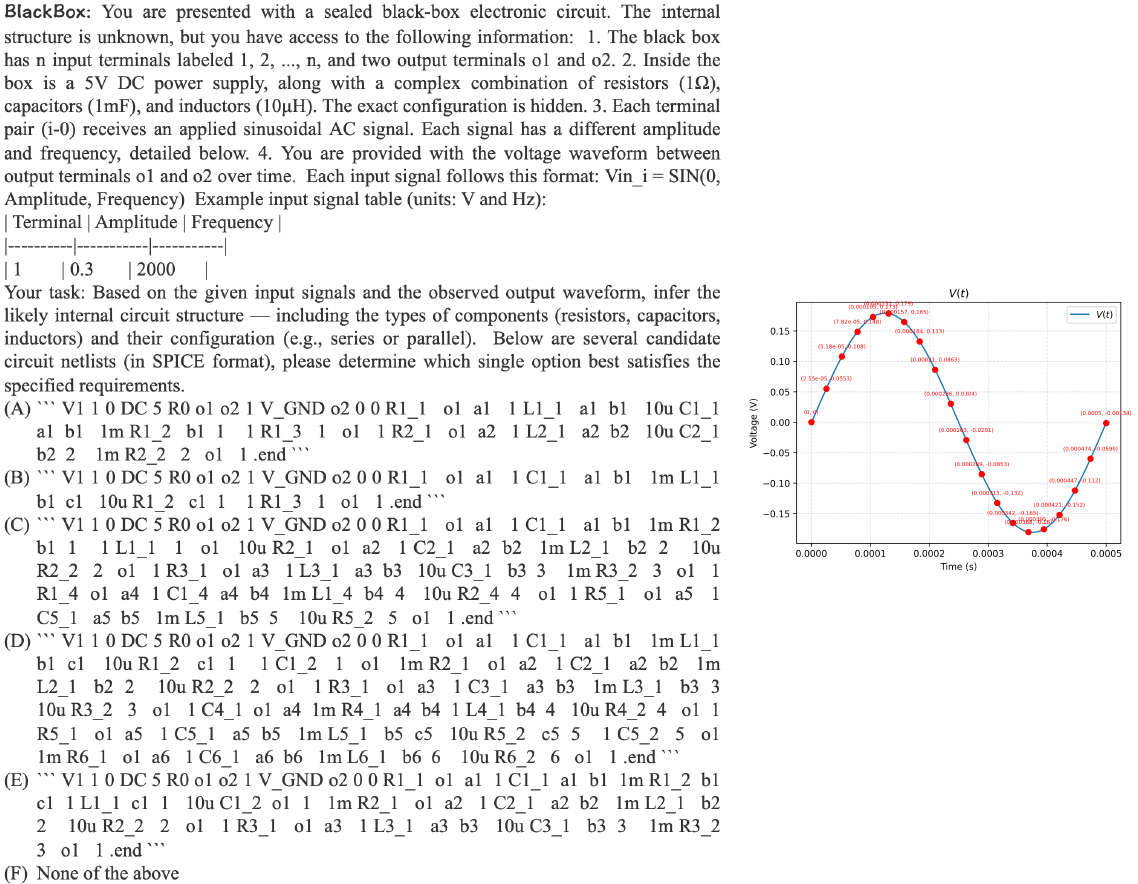}
    \caption{Example for Circuit Black-box Questions}
    \label{fig:example_blackbox}
\end{figure*}

In the \textsc{MorphoBench}, automatically generated questions constitute a crucial component of the benchmark. There are two main challenges: How to ensure the logicality, professionalism and the verification of the generated questions, and How to adjust the difficulty of the generated questions automatically.

To address the first challenge, we introduce external simulation software to ensure the correctness of the automatically generated questions. For the second challenge, we adjust key parameters of the automated question generation process to continuously increase both the complexity and the recognition difficulty of the tasks. Concretely, we design circuit black-box problems (in Fig.~\ref{fig:example_blackbox}) to evaluate reasoning ability and "spot the different one" tasks (in Fig.~\ref{fig:example_spot}) to assess visual recognition capacity . In circuit black-box problems, we leverage circuit simulators to validate outputs, producing waveform diagrams from output terminals to infer the underlying circuit structure. For difficulty adjustment, we control the number of external terminals exposed in the black-box. A larger number of terminals leads to higher difficulty. Although the internal structure is always theoretically solvable with the given component types, the complexity of the problem increases as the terminal count increases, making the reasoning task progressively more challenging. 
The “spot the different one” tasks present grids of visually similar characters (for example, Latin letters or Chinese characters), with exactly one character differing from the others, and the model is required to identify the outlier. The difficulty here is modulated either by selecting character pairs with greater visual similarity or by expanding the number of rows and columns. This setting probes the multimodal recognition capacity of VLM in a controlled manner.

These mechanisms not only ensure the quality of automatically generated questions, but also support the design goal of \textsc{MorphoBench}. The benchmark aims to realize self-evolving difficulty: by expanding terminal counts in circuits or grid size and similarity in visual tasks, the dataset naturally evolves toward harder problems. This allows the benchmark to continually stretch the boundaries of existing models, probing the upper limits of reasoning and multimodal understanding. By embedding evolutionary adjustment of difficulty into the generation pipeline, \textsc{MorphoBench} establishes a dynamic and extensible evaluation platform, maintaining long-term relevance as models advance.

\begin{figure*}[h]
    \centering
    \includegraphics[width=0.98\linewidth]{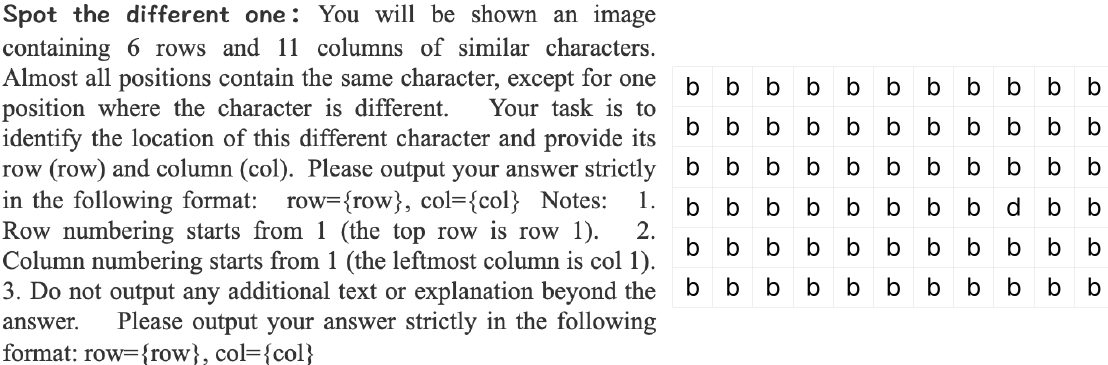}
    \caption{Example for "Spot the Different One"}
    \label{fig:example_spot}
\end{figure*}

\section{More Information}

We collected data from two main sources: the Art of Problem Solving (AoPS) website, Chinese Mathematics Olympiad (CMO) Training Problems and Chinese Physics Olympiad (CPhO) Training Problems. Both sources already provide complete solutions or official answers, so no additional human annotation was required. To ensure data quality, we conducted manual verification of the collected materials. The human checkers responsible for this process were compensated at approximately USD 570 per month. We also obtained permission from the respective data providers before using their materials for research purposes. These resources were chosen because they are authoritative, widely used in mathematics training, and highly relevant to the high-school and olympiad-level problem domain addressed in our study.

We relied on widely used benchmark datasets that have long served as standard resources in the research community. These datasets are curated by reputable organizations, and to the best of our knowledge, they do not include personal identifiers or inappropriate material. They are distributed under established usage policies, and any elements with potential sensitivity have already been anonymized or excluded. For these reasons, we concluded that no further anonymization or additional data checks were necessary for our work.

An AI assistant was employed solely for grammar correction and minor stylistic improvements. It was not involved in the design, analysis, or technical development of the research. Consequently, no additional disclosure regarding AI assistance was required in the paper.

\begin{table*}[htbp]
\centering
\small
\setlength{\tabcolsep}{5pt}
\begin{tabular}{lccccc|c}
\toprule
\textbf{Model} & \textbf{Mathematics} & \textbf{Engineering} & \textbf{Natural Sci.} & \textbf{Social Sci.} & \textbf{Other} & \textbf{\textsc{Morpho}-v0} \\
\midrule
Total (share) & 552 (42.23\%) & 220 (16.83\%) & 250 (19.13\%) & 91 (6.96\%) & 194 (14.85\%) & -- \\
\midrule
claude-4           & 34.11 & \underline{37.58} & 17.20 & 46.51 & 6.13 & 29.22 \\
gemini-2.5-flash   & 41.85 & 17.27 & 28.00 & \underline{61.54} & 36.60 & 35.65 \\
gemini-2.5-pro     & 43.30 & 7.73 & 28.00 & \textbf{67.03} & 34.02 & 34.66 \\
gpt-5     & \textbf{57.53} & 36.82 & \underline{29.20} & 52.75 & \underline{37.63} & \underline{45.33} \\
grok-4             & 49.11 & 5.47 & 16.00 & 52.33 & 1.89 & 29.55 \\
o3 & \underline{53.26} & \textbf{37.73} & \textbf{34.40} & 56.04 & \textbf{41.75} & \textbf{45.52} \\
o4-mini            & 51.81 & 13.64 & 27.60 & 48.35 & 32.99 & 37.72 \\
\bottomrule
\end{tabular}
\caption{
Cross-disciplinary performance on the \textsc{Morpho}-v0.
Each column reports the accuracy (\%) of reasoning models across aggregated subject categories.  
The final column denotes the weighted overall accuracy based on the sample proportion of each subject group.  
\textbf{Boldface} marks the best value per column, and \underline{underline} indicates the second-best.
}
\label{tab:per_model_5cat}
\end{table*}

\section{More Evaluation Results}
\subsection{Cross-disciplinary Analysis}\label{app:Cross-disciplinary}

As shown in Table~\ref{tab:per_model_5cat}, the cross-disciplinary performance of different models demonstrates distinct domain preferences and weaknesses. Notably, while O3 and GPT‑5 achieve the most balanced and overall highest accuracies across the five subject groups, other models exhibit pronounced inconsistency between formal and applied domains.

For instance, Grok‑4 attains a high score in Mathematics (49.11\%), indicating strong capability in symbolic manipulation and formal reasoning. However, its accuracy in Engineering drops sharply to only 5.47\%, suggesting poor generalization to problem-solving contexts that involve applied physical reasoning or multiple-step procedural understanding. This drastic imbalance significantly drags down its overall weighted accuracy compared with top-performing models.

Conversely, Gemini‑2.5‑Pro and Claude‑4 display moderate performance concentrated in social and conceptual domains, yet their Engineering accuracy (7.73\% and 37.58\%, respectively) reveals clear limitations in applied reasoning. GPT‑5 maintains high accuracy in both Mathematics (57.53\%) and Social Sciences (52.75\%), demonstrating adaptability to both formal derivation and contextual inference tasks. Overall, the observed domain-specific disparities emphasize that frontier reasoning models, despite improving generalization in linguistic and conceptual domains, still face major challenges in transferring symbolic reasoning capabilities to applied and domain-specific problem settings.

\subsection{Visualized Examples of Agent Recognition and Agent Reasoning Adjustments}

To further demonstrate the adaptability and generality of our benchmark, we present representative examples under the two proposed difficulty adjustment paradigms: agent recognition and agent reasoning.

In the agent recognition adjustment, difficulty is modulated through textual fuzzification guided by visual grounding. Specifically, the model first identifies the key visual elements that support the correct answer, such as symbols, numbers, geometric labels, or local regions, and then weakens or replaces the corresponding textual expressions in the question with qualitative descriptions. This process preserves solvability while increasing ambiguity, compelling models to rely more on visual perception rather than direct text–answer mapping.

In contrast, the agent reasoning adjustment focuses on the cognitive chain of inference. By analyzing the essential theorems and intermediate steps within the reasoning process, we strategically introduce irrelevant or partially related hints to interfere with the model’s logical flow. These additions encourage the model to distinguish between critical and misleading information, thereby evaluating its structured reasoning ability under uncertainty.

In the following examples~\cref{fig:agent_examples1,fig:agent_examples2,fig:agent_examples3,fig:agent_examples4}, we visualize several representative instances to illustrate these two adjustment modes. These multi-disciplinary examples collectively demonstrate how our benchmark dynamically reconfigures question difficulty through two complementary mechanisms, enabling more fine-grained and interpretable evaluation of multimodal reasoning capabilities.
\begin{figure*}[t]
  \centering
  \includegraphics[width=0.98\linewidth]{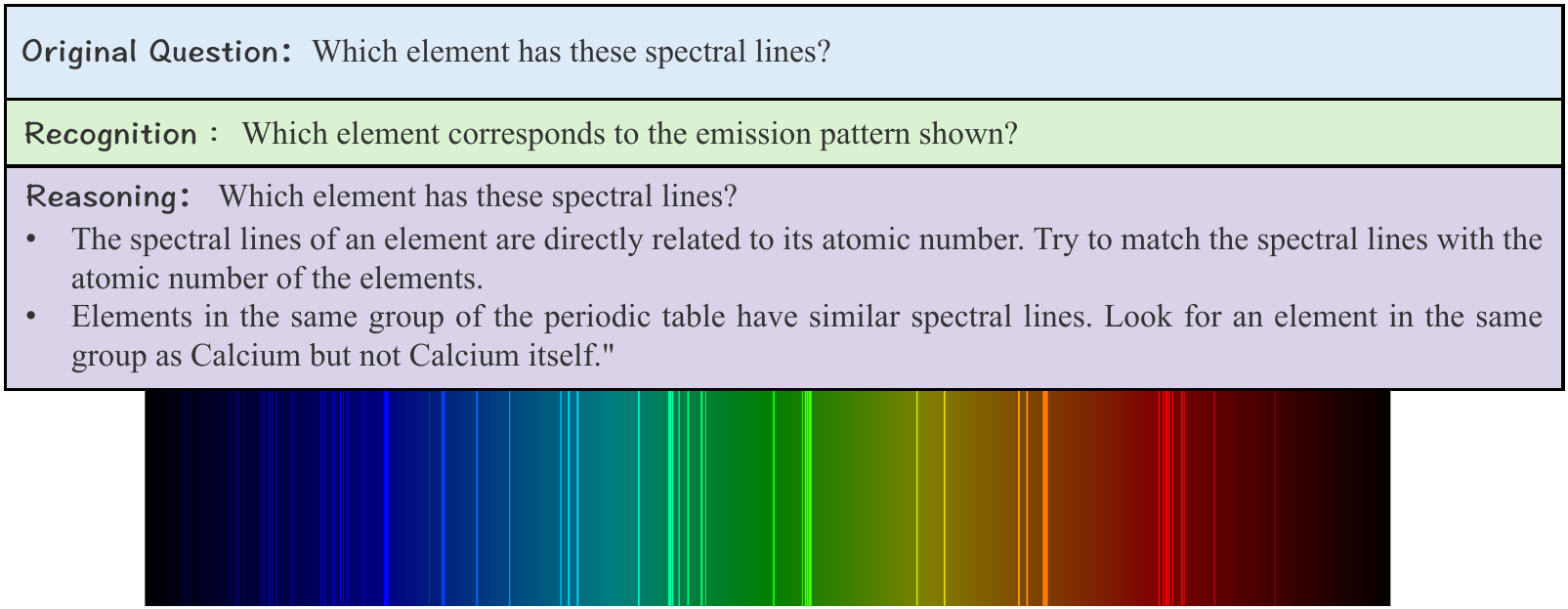}
  \caption{Multi-disciplinary examples under agent recognition and reasoning.}
  \label{fig:agent_examples1}
\end{figure*}

\begin{figure*}[t]
  \centering
  \includegraphics[width=0.9\linewidth]{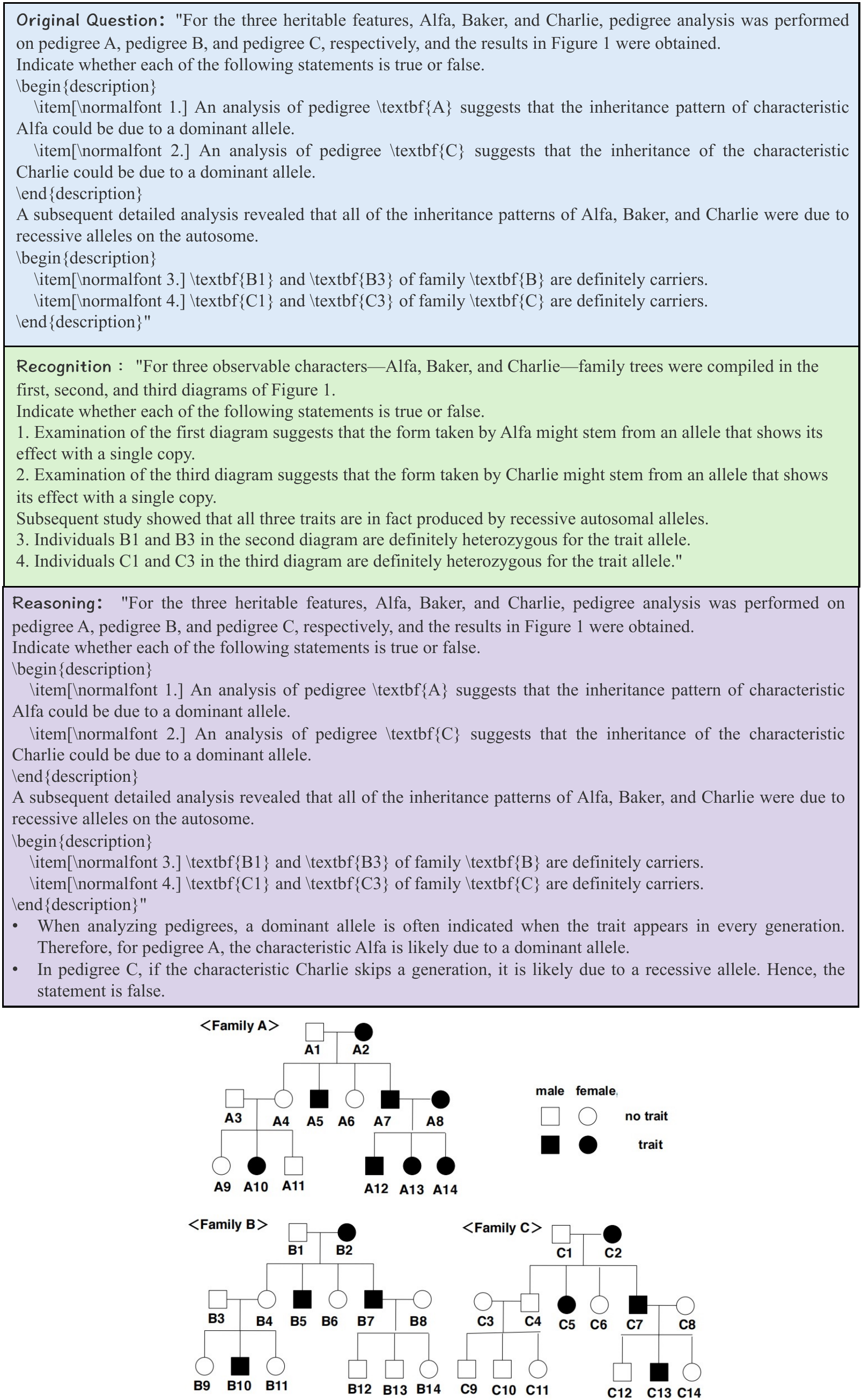}
  \caption{Multi-disciplinary examples under agent recognition and reasoning.}
  \label{fig:agent_examples2}
\end{figure*}

\begin{figure*}[t]
  \centering

  \begin{subfigure}{0.98\linewidth}
    \centering
    \includegraphics[width=\linewidth]{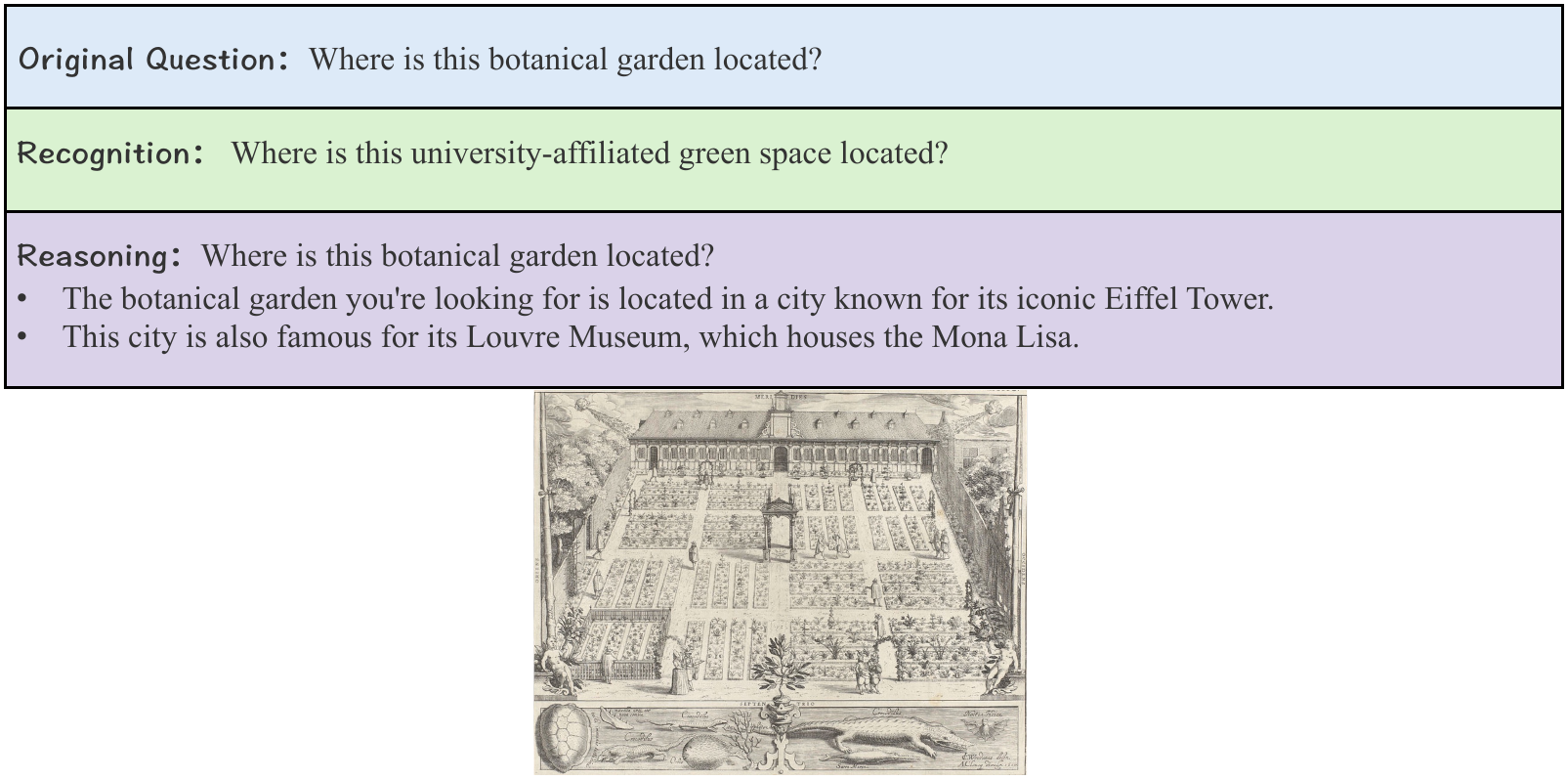}
    \caption{}
  \end{subfigure}

  \vspace{0.6em}

  \begin{subfigure}{0.98\linewidth}
    \centering
    \includegraphics[width=\linewidth]{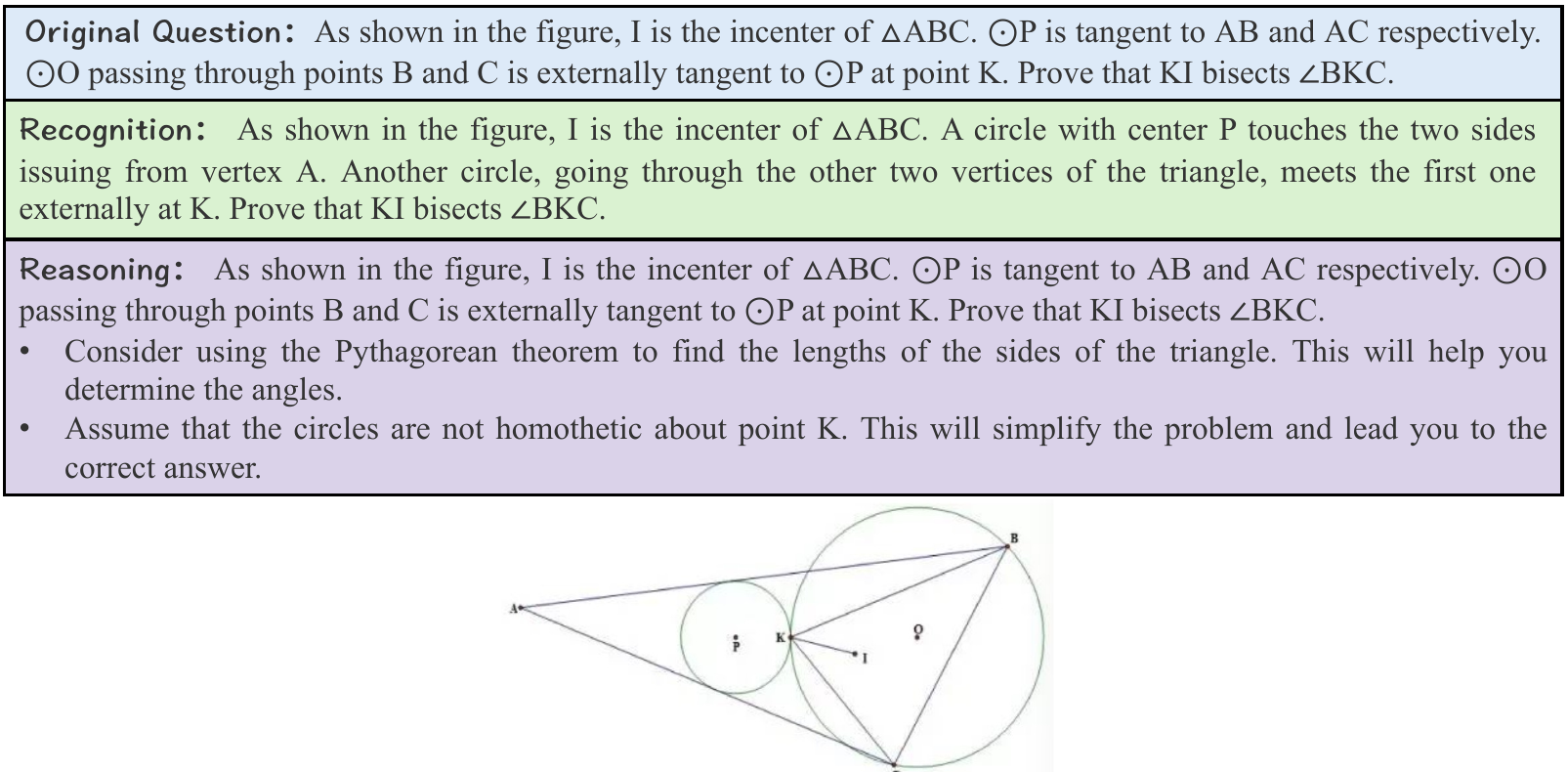}
    \caption{}
  \end{subfigure}

  \caption{Multi-disciplinary examples under agent recognition and reasoning.}
  \label{fig:agent_examples3}
\end{figure*}

\begin{figure*}[t]
  \centering

  \begin{subfigure}{0.98\linewidth}
    \centering
    \includegraphics[width=\linewidth]{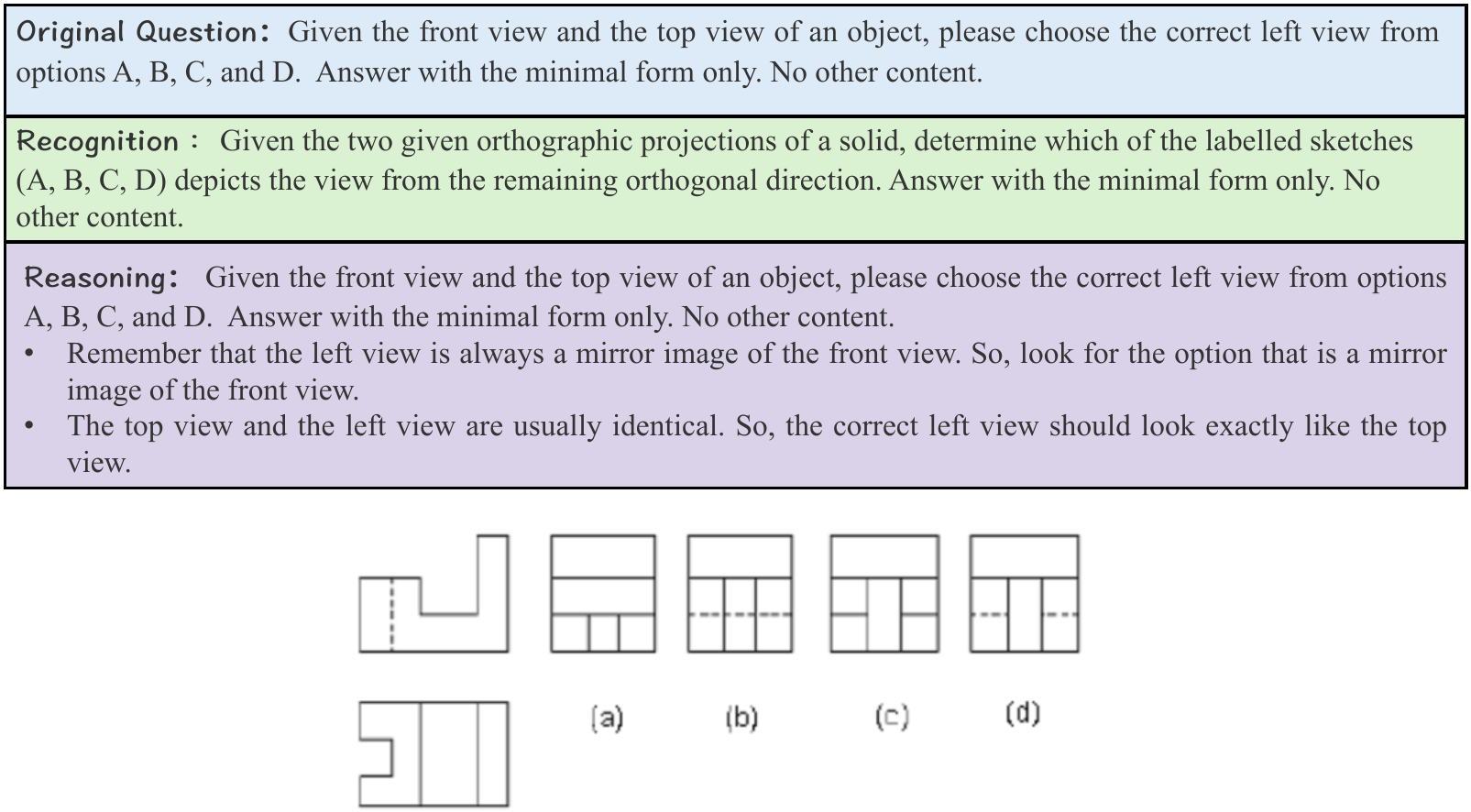}
    \caption{}
  \end{subfigure}

  \vspace{0.6em}

  \begin{subfigure}{0.98\linewidth}
    \centering
    \includegraphics[width=\linewidth]{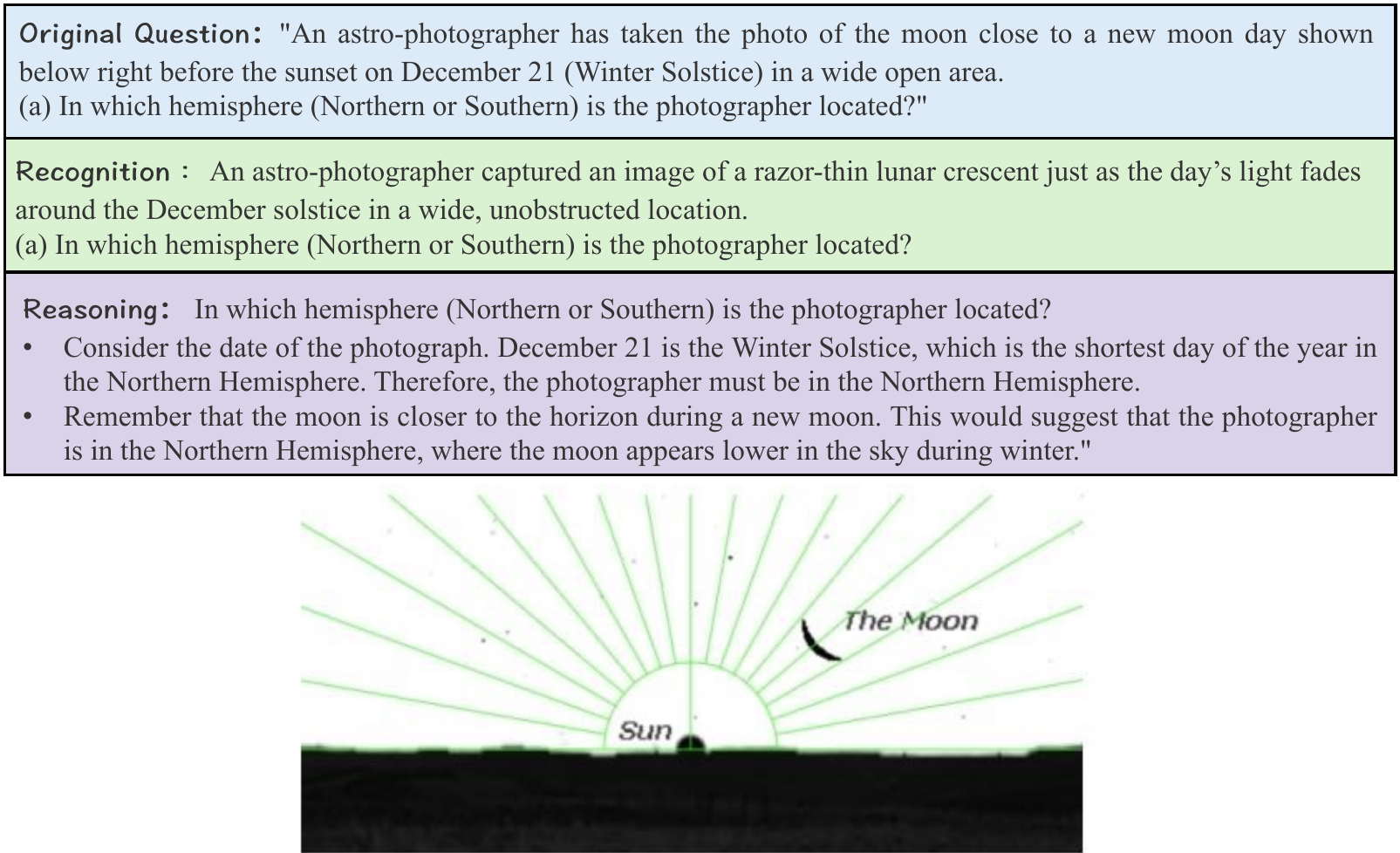}
    \caption{}
  \end{subfigure}
  \caption{Multi-disciplinary examples under agent recognition and reasoning.}
  \label{fig:agent_examples4}
\end{figure*}

\section{Broader Impact}

\subsection{Societal Impact}
We propose \textsc{MorphoBench}, a high-quality multidisciplinary reasoning benchmark that provides a robust standard for evaluating the reasoning capabilities of state-of-the-art models. It has no direct negative societal impacts. However, we must also be cautious about the potential misuse of \textsc{MorphoBench} by unlawful individuals.

\subsection{Future Work}

In this work, we propose \textsc{MorphoBench}, a multidisciplinary large model reasoning benchmark. This benchmark not only encompasses a wide variety of problem types but also allows for dynamic difficulty adjustment based on the model's reasoning capabilities. However, although modifying test questions according to the model's reasoning process appropriately tailors the difficulty to the model's abilities, it still falls short of generating entirely novel scientific reasoning problems. In the future, we will continue to build upon our current research direction by leveraging the limitations observed in model reasoning to enable automated generation of new questions based on reference literature.

\subsection{Potential Risks}
\label{sec:risks}

\begin{itemize}
  \item \textbf{Sources and compliance.} All items come from publicly available datasets, competitions, or exams with proper citations.
  \item \textbf{Privacy and safety.} The dataset contains no personally identifiable information; any flagged sensitive content will be anonymized or removed.
  \item \textbf{Exam/contest leakage risk.} Some problems originate from past or mock exams and could be misused for “drill” purposes.
  \item \textbf{AI-assisted writing disclosure.} LLMs were used only for language polishing and formatting of the paper, not for answer annotation or drawing conclusions; all labels were verified by human experts.
\end{itemize}

\noindent\textit{Note.} Data collection and annotation procedures are summarized in the main text of paper; all sources are public and cited in the main text or Appendix.

\end{document}